\def\eqref#1{equation~\ref{#1}}
\def\1{\bm{1}}
\DeclareMathAlphabet{\mathsfit}{\encodingdefault}{\sfdefault}{m}{sl}
\SetMathAlphabet{\mathsfit}{bold}{\encodingdefault}{\sfdefault}{bx}{n}
\setlist[itemize]{noitemsep}
\theoremstyle{definition}
\newtheorem{theorem}{Theorem}
\newtheorem{corollary}{Corollary}
\newtheorem{lemma}{Lemma}
\title{Understanding Sensitivity of Differential Attention through the Lens of Adversarial Robustness}
\author{Tsubasa Takahashi$^{1}$\thanks{Currently with Acompany Co., Ltd. $^\dag$Work done while the author was an intern at Turing Inc.} \quad Shojiro Yamabe$^{2\dag}$
\quad Futa Waseda$^{3\dag}$ \quad  Kento Sasaki$^{1,4}$ \\
$^1$Turing Inc. \quad $^2$Institute of Science Tokyo \quad $^3$The University of Tokyo  \quad $^4$University of Tsukuba\\
\texttt{tsubasa.takahashi@acm.org, kento.sasaki@turing-motors.com}
}
\begin{document}

\maketitle

\begin{abstract}
Differential Attention (DA) has been proposed as a refinement to standard attention, suppressing redundant or noisy context through a subtractive structure and thereby reducing contextual hallucination. While this design sharpens task-relevant focus, we show that it also introduces a structural fragility under adversarial perturbations. Our theoretical analysis identifies \emph{negative gradient alignment}—a configuration encouraged by DA’s subtraction—as the key driver of sensitivity amplification, leading to increased gradient norms and elevated local Lipschitz constants. We empirically validate this Fragile Principle through systematic experiments on ViT/DiffViT and evaluations of pretrained CLIP/DiffCLIP, spanning five datasets in total. These results demonstrate higher attack success rates, frequent gradient opposition, and stronger local sensitivity compared to standard attention. Furthermore, depth-dependent experiments reveal a robustness crossover: stacking DA layers attenuates small perturbations via \emph{depth-dependent noise cancellation}, though this protection fades under larger attack budgets. Overall, our findings uncover a fundamental trade-off: DA improves discriminative focus on clean inputs but increases adversarial vulnerability, underscoring the need to jointly design for selectivity and robustness in future attention mechanisms.
\end{abstract}

\section{Introduction}\label{sec:intro}
Recent advances in attention mechanisms have significantly enhanced the representational capacity of deep learning models, driving breakthroughs across vision, language, and multimodal domains.
However, these mechanisms can sometimes lead to contextual hallucinations due to the misallocation of attention scores~\citep{DBLP:conf/cvpr/HuangDZ0H0L0Y24, DBLP:conf/acl/MaynezNBM20}.

The Differential Transformer~\citep{ye2025differential}, introduced alongside the Differential Attention (DA) mechanism, offers a promising refinement to address such misallocation issues.
DA employs two distinct attention maps, $A_1$ and $A_2$, and introduces a subtraction operation, $A_1 - \lambda A_2$, inspired by noise-cancellation techniques in signal processing.
This formulation suppresses irrelevant or noisy information while enhancing focus on task-relevant features.
The subtraction structure encourages higher attention weights in $A_1$ for informative regions and lower weights in $A_2$ for the same regions, yielding sharper and more selective outputs (Figure~\ref{fig:overview-a}).

Thanks to these advantages, the Differential Transformer effectively mitigates contextual hallucinations in summarization and question answering tasks~\citep{ye2025differential}.
This improvement likely stems from DA’s increased selectivity for task-relevant content, aligning with prior findings~\citep{DBLP:conf/cvpr/HuangDZ0H0L0Y24} that identify attention misallocation as a key cause of hallucination in standard Transformers.

These properties make DA particularly attractive for safety-critical applications, such as autonomous driving, medical diagnostics, and legal document analysis, where minimizing spurious or incoherent model behavior is essential.
These advantages have already inspired a number of follow-up studies~\citep{DBLP:conf/cvpr/AbidMWT25, DBLP:journals/tmlr/HammoudG25, DBLP:conf/ijcnlp/SchneiderZN25}.

At first glance, the subtractive structure of DA may appear inherently beneficial for robustness—by attenuating noisy signals, one might expect it to improve stability against adversarial perturbations as well.
However, does this intuition hold?
In this work, we rigorously challenge this assumption and show that the same structure designed to suppress noise can introduce a latent vulnerability.
We formalize this phenomenon as the \emph{Fragile Principle} of DA, revealing how subtractive mechanisms amplify input perturbation sensitivity and expose the model to adversarial fragility (Figure~\ref{fig:overview-b}).

\begin{figure}[t]
\centering
\subfigure[Behavior of Differential Attention under Clean Samples]{
    \includegraphics[width=\textwidth]{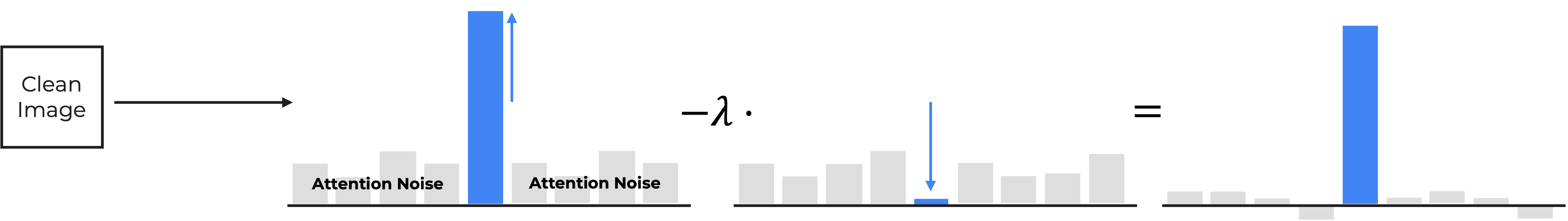}
    \label{fig:overview-a}
}
\\
\subfigure[Behavior of Differential Attention under Adversarial Samples]{
    \includegraphics[width=\textwidth]{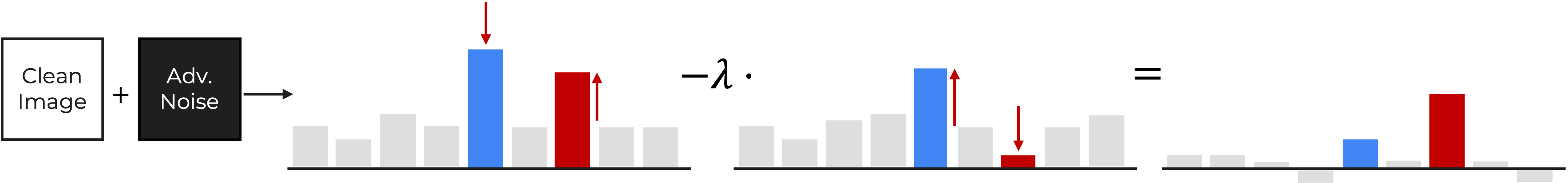}
    \label{fig:overview-b}
}
\caption{
Illustration of the Fragile Principle of Differential Attention.
(a) On clean inputs, well-aligned attention maps cancel redundant focus, producing sharp and stable responses.
(b) With adversarial perturbations, the gradients (red arrows) of the two attention branches may become negatively aligned, amplifying small input changes and leading to conflicting responses.
}
\label{fig:overview}
\end{figure}

We theoretically show that when the gradients of the two branches are aligned in opposite directions—that is, they point against each other with $\cos\theta < 0$, a condition we call \textbf{negative gradient alignment}—the subtraction amplifies the overall gradient norm and increases the local Lipschitz constant.
This phenomenon, inherent to DA’s subtractive design, not only reduces attention misallocation but also increases sensitivity to adversarial perturbations.
Beyond this single-layer fragility, we uncover a \textbf{depth-dependent robustness effect}: stacking DA layers enhances robustness to small perturbations via cumulative noise cancellation, though fragility re-emerges at larger budgets.

To validate this, we conduct extensive experiments on DA-based models trained on diverse datasets, evaluating their robustness against adversarial perturbations.
The results support our theoretical predictions: DA exhibits higher attack success rates, more frequent negative gradient alignment, and larger local Lipschitz estimates than models with standard attention.

\textbf{Contributions.}
Our work makes the following key contributions:
\begin{itemize}[topsep=0pt,itemsep=0.5pt,partopsep=1ex,parsep=1ex]
\item We present the first theoretical investigation of Differential Attention (DA) from the perspective of adversarial robustness.
\item We show that DA’s subtraction, while mitigating attention misallocation, structurally induces \emph{negative gradient alignment}, amplifying sensitivity compared to standard attention and revealing a unique adversarial vulnerability (Section~\ref{sec:analysis}).
\item We develop a depth-dependent analysis, theoretically and empirically demonstrating how cumulative noise cancellation yields partial robustness under small perturbations (Section~\ref{sec:analysis}).
\item We evaluate DiffCLIP and DiffViT across multiple datasets, showing increased attack success rates, stronger negative gradient alignment, and higher local Lipschitz estimates relative to standard attention (Section~\ref{sec:experiments}).
\end{itemize}

\section{Related Work}

\textbf{Attention Robustness and Vulnerability.} 
Transformers~\citep{DBLP:conf/nips/VaswaniSPUJGKP17} achieve state-of-the-art results across domains, yet are more vulnerable to adversarial inputs than CNNs~\citep{DBLP:conf/nips/BaiMYX21,DBLP:conf/nips/MoW0G022,DBLP:conf/cvpr/JainD24}. 
ViTs~\citep{DBLP:conf/iclr/DosovitskiyB0WZ21} are especially susceptible to patch attacks due to global receptive fields~\citep{DBLP:conf/iclr/FuZWWL22,DBLP:conf/icml/LiuGZ023}. 
Transfer-based attacks~\citep{DBLP:conf/iclr/NaseerR0KP22,DBLP:conf/nips/MingRWF24} and robustness-oriented training~\citep{DBLP:conf/nips/QinZCLB022,DBLP:conf/iclr/WangWCGJLL21} show that robustness must be engineered. 
Differential Attention (DA) suppresses redundant signals through subtraction, but its robustness remains unexplored.
While prior works aim to improve attention robustness through architectural modifications or robustness-enhancing training, our work takes a different perspective: we analyze the intrinsic sensitivity of DA as a fixed mechanism, rather than proposing or comparing robustness-oriented variants.

\textbf{Interpretability in Attention.} 
The explanatory role of attention is contested: some argue it fails to reflect reasoning~\citep{DBLP:conf/naacl/JainW19,DBLP:conf/acl/SerranoS19,DBLP:conf/kdd/BaiLZLBW21}, while others promote sparsity or concept alignment~\citep{DBLP:conf/nips/MartinsFTNAF20,DBLP:conf/nips/PanPJWFO21,DBLP:conf/iclr/RigottiMGGS22}. 
Structured mechanisms such as Structured Attention~\citep{DBLP:conf/nips/NiculaeB17} aim for transparency, but their robustness implications remain unclear.

\textbf{Lipschitz Behavior and Gradient Sensitivity.} 
Certified robustness often relies on Lipschitz constraints, which bound worst-case changes. 
Prior work constrains attention layers~\citep{DBLP:conf/icml/KimPM21,DBLP:conf/icml/DasoulasSV21}, stabilizes entropy~\citep{DBLP:conf/icml/ZhaiLLBR0GS23}, or shows weight decay modulates sensitivity~\citep{DBLP:conf/nips/KobayashiAO24}. 
Other analyses examine sequence length and normalization~\citep{DBLP:conf/icml/CastinAP24}. 
These works suggest Lipschitz properties strongly shape robustness.

\textbf{Structural Fragility and OOD Sensitivity.} 
Beyond input perturbations, structural vulnerabilities also emerge: CNNs exhibit frequency-specific sensitivity~\citep{DBLP:conf/cvpr/TsuzukuS19}, and flows assign high likelihood to trivial OODs~\citep{DBLP:conf/aaai/Osada0N24}. 
These findings underscore that architecture itself can amplify or dampen robustness. 
To our knowledge, no prior work studies DA’s subtractive design, which we show suppresses redundancy yet paradoxically increases adversarial fragility.

\section{Preliminary}

In this section, we briefly review the standard attention mechanism and introduce the structure of Differential Attention (DA)~\citep{ye2025differential}, which forms the basis for our theoretical analysis.

\subsection{Attention Mechanism}

Given an input sequence represented as a matrix $X \in \mathbb{R}^{N \times d}$, where $N$ is the sequence length and $d$ is the embedding dimension, the self-attention mechanism computes the output as follows:
\[
\text{Attention}(X) = \text{Softmax}\left( \frac{QK^\top}{\sqrt{d_k}} \right) V,
\]
where the query, key, and value matrices are linear projections:
$Q = X W^Q, K = X W^K, V = X W^V$,
with $W^Q, W^K, W^V \in \mathbb{R}^{d \times d_k}$ being learned parameters, and $d_k$ denoting the head dimension.
The softmax operation normalizes the attention scores across the key dimension, assigning higher weights to tokens that are more relevant to each query.

\subsection{Differential Attention}

Differential Attention (DA)~\citep{ye2025differential} modifies the standard attention structure by introducing two parallel branches of attention maps, whose outputs are combined in a subtractive manner to suppress redundant or noisy signals.
Specifically, DA computes two separate sets of query and key projections:
\[
Q_1 = X W_1^Q, \quad K_1 = X W_1^K, \quad Q_2 = X W_2^Q, \quad K_2 = X W_2^K,
\]
and a single shared value projection $V = X W^V$ as well as standard attention.
Each branch independently computes its own attention map as
\[
A_1 = \text{Softmax}\left( \frac{Q_1 K_1^\top}{\sqrt{d_k}} \right), \quad A_2 = \text{Softmax}\left( \frac{Q_2 K_2^\top}{\sqrt{d_k}} \right).
\]
The output of DA is then given by:
\[
\text{DA}(X) = (A_1 - \lambda A_2) V,
\]
where $\lambda$ is a learned or fixed scalar parameter that controls the contribution of the subtractive branch.
When $\lambda$ is trainable, its initial value is set by $\lambda_{\text{init}}$. The default value of $\lambda_{\text{init}}$ is set to 0.8 through intensive study in \citep{ye2025differential}.
This structure aims to enhance semantic focus by amplifying meaningful patterns captured by $A_1$ while suppressing redundant or noisy patterns captured by $A_2$.

\section{Theoretical Analysis: Structural Sensitivity and Robustness}
\label{sec:analysis}

We now provide a theoretical analysis of the sensitivity of Differential Attention (DA) to input perturbations. 
While DA is motivated by the goal of suppressing redundant focus through subtraction, we show that the same structure can, under specific conditions, amplify sensitivity and thereby introduce fragility.

We first examine how gradient amplification arises when the two attention branches are negatively aligned---a configuration that DA naturally encourages (Section~\ref{subsec:sens_amp}). 
We then extend the analysis to multiple layers, where stacking can yield a depth-dependent cancellation effect (Section~\ref{subsec:depth}). 
Finally, we discuss the scope and limitations of these results.

\textbf{Notation.}
Let $x$ be a clean input and $x' = x + \xi$ a perturbed input with small perturbation $\xi$. 
We analyze the sensitivity of the differential attention map $A_{\text{DA}}(x)$ with respect to $\xi$.
We denote by $A_{\text{base}}$ the standard (non-differential) attention map. 

All proofs of Lemmas and Theorems are provided in Appendix~\ref{sec:proof}.

\subsection{Fragile Principle: Sensitivity Amplification via the Subtractive Structure}
\label{subsec:sens_amp}

A core feature of DA is its subtractive formulation:
$A_{\text{DA}} = A_1 - \lambda A_2$,
designed to suppress redundant or noisy attention patterns.
For this subtraction to be effective, the two attention maps $A_1$ and $A_2$ cannot act independently; they must focus on overlapping regions but with opposing intensities (see Figure~\ref{fig:overview}). In practice, this means that during training the model is implicitly encouraged to assign gradients pointing in opposite directions to $A_1$ and $A_2$ within those regions.

Such opposite alignment is not merely a byproduct but a functional necessity: without it, the subtraction would fail to sharpen focus or remove noise. The cancellation mechanism therefore builds in a structural bias toward negative gradient alignment. While this property enables DA to highlight informative regions more clearly than standard attention, it also raises the possibility that the same mechanism could amplify sensitivity to perturbations. Our analysis begins from this structural observation and investigates its consequences for robustness.

\begin{lemma}
Let $\theta$ be the angle between the input gradients of $A_1$ and $A_2$. Then,
\begin{equation}
\| \nabla_\xi A_{\text{DA}} \|^2 
= \| \nabla_\xi A_1 \|^2 
+ \lambda^2 \| \nabla_\xi A_2 \|^2 
- 2\lambda \| \nabla_\xi A_1 \| \| \nabla_\xi A_2 \| \cos \theta.
\end{equation}
\end{lemma}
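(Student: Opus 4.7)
The plan is to derive the identity by a direct algebraic expansion, exploiting the linearity of the differential operator $\nabla_\xi$ and the parallelogram-style expansion of a squared norm. No structural facts about attention or softmax are needed; only the subtractive definition $A_{\text{DA}} = A_1 - \lambda A_2$ and the definition of the angle $\theta$ between the two input-gradients are required.

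First I would apply linearity of $\nabla_\xi$ to write
\[
\nabla_\xi A_{\text{DA}} \;=\; \nabla_\xi A_1 \;-\; \lambda\, \nabla_\xi A_2,
\]
treating both sides as (flattened) Jacobians with entries $\partial (A_\cdot)_{ij}/\partial \xi_k$. Next I would expand $\|\nabla_\xi A_1 - \lambda\, \nabla_\xi A_2\|^2$ using bilinearity of the inner product, producing the three terms
\[
\|\nabla_\xi A_1\|^2 \;+\; \lambda^2\, \|\nabla_\xi A_2\|^2 \;-\; 2\lambda\, \langle \nabla_\xi A_1,\, \nabla_\xi A_2 \rangle.
\]
Finally I would substitute the definition of the angle $\theta$, namely
\[
\langle \nabla_\xi A_1,\, \nabla_\xi A_2 \rangle \;=\; \|\nabla_\xi A_1\|\, \|\nabla_\xi A_2\|\, \cos\theta,
\]
which immediately yields the claimed identity. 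No inequality is ever invoked, so the result is an exact equality under the stated hypotheses.

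The main obstacle, which is one of presentation rather than of content, is fixing a consistent convention for the gradient of a matrix-valued map with respect to a vector-valued input. A clean route is to vectorize: view $A_1$ and $A_2$ as elements of $\mathbb{R}^{N \times N}$ equipped with the Frobenius inner product, so that $\nabla_\xi A_j$ is a Jacobian in $\mathbb{R}^{N^2 \times d}$, with $\|\cdot\|$ denoting the induced (Frobenius) norm and $\langle\cdot,\cdot\rangle$ the corresponding trace inner product. Under this convention, the angle $\theta$ between the two vectorized Jacobians is well-defined, with the standard caveat that $\cos\theta$ is meaningful only when both gradients are nonzero; the degenerate case is excluded without loss of generality, since it corresponds to a branch whose contribution to sensitivity vanishes. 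Once this convention is declared explicitly at the start of the proof, the remaining calculation is routine.
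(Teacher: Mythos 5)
Your proposal is correct and follows essentially the same argument as the paper: differentiate $A_{\text{DA}} = A_1 - \lambda A_2$ using linearity, expand the squared norm of $\nabla_\xi A_1 - \lambda \nabla_\xi A_2$ via bilinearity, and substitute $\langle \nabla_\xi A_1, \nabla_\xi A_2 \rangle = \|\nabla_\xi A_1\|\,\|\nabla_\xi A_2\|\cos\theta$. Your extra remarks on vectorizing the Jacobians and excluding the degenerate zero-gradient case are sensible clarifications that the paper leaves implicit, but they do not change the substance of the proof.
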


The cross-term becomes positive whenever $\cos\theta < 0$, causing gradient amplification.

\begin{theorem}[Sensitivity Amplification by Alignment]
Let $\rho = \|\nabla_\xi A_2\| / \|\nabla_\xi A_1\|$. Then,
\[
\|\nabla_\xi A_{\text{DA}}\|^2 
= \|\nabla_\xi A_1\|^2 \big(1 + \lambda^2 \rho^2 - 2\lambda \rho \cos \theta\big).
\]
Asymptotically,
\begin{equation}
\|\nabla_\xi A_{\text{DA}}\| =
\begin{cases}
(1 - \lambda\rho)\,\|\nabla_\xi A_1\| & \text{if } \cos\theta = +1, \\
(1 + \lambda\rho)\,\|\nabla_\xi A_1\| & \text{if } \cos\theta = -1.
\end{cases}
\end{equation}
\end{theorem}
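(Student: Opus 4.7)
The plan is to derive this theorem as a direct corollary of the preceding Lemma, since both statements concern the same quantity $\|\nabla_\xi A_{\mathrm{DA}}\|$. First I would take the Lemma's identity
\[
\|\nabla_\xi A_{\text{DA}}\|^2 = \|\nabla_\xi A_1\|^2 + \lambda^2 \|\nabla_\xi A_2\|^2 - 2\lambda \|\nabla_\xi A_1\|\,\|\nabla_\xi A_2\|\cos\theta
\]
as the starting point; nothing else is needed about the structure of $A_1$ and $A_2$. The two conclusions of the theorem are then separated into (i) an exact reparameterization in terms of the ratio $\rho$, and (ii) a specialization at the two extreme alignments $\cos\theta = \pm 1$.

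For step (i), I would factor $\|\nabla_\xi A_1\|^2$ out of every term on the right-hand side. Since $\|\nabla_\xi A_2\| = \rho\,\|\nabla_\xi A_1\|$ by definition, the identity collapses to
\[
\|\nabla_\xi A_{\text{DA}}\|^2 = \|\nabla_\xi A_1\|^2\bigl(1 + \lambda^2\rho^2 - 2\lambda\rho\cos\theta\bigr),
\]
which is precisely the first claim. This step assumes implicitly that $\|\nabla_\xi A_1\| \neq 0$, so that $\rho$ is well-defined; the degenerate case can be handled separately since then the first branch contributes nothing and $\|\nabla_\xi A_{\mathrm{DA}}\| = \lambda\|\nabla_\xi A_2\|$, consistent with the $\rho \to \infty$ limit of the formula.

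For step (ii), I would substitute $\cos\theta = \pm 1$ and recognize that the factor inside the parentheses becomes $1 \mp 2\lambda\rho + \lambda^2\rho^2 = (1\mp\lambda\rho)^2$, a perfect square. Taking the nonnegative square root then gives $\|\nabla_\xi A_{\mathrm{DA}}\| = |1 \mp \lambda\rho|\,\|\nabla_\xi A_1\|$, which matches the stated cases. The only subtle point, and the one I expect to be the main obstacle to a clean write-up, is the sign convention in the aligned case $\cos\theta = +1$: the formula $(1-\lambda\rho)\|\nabla_\xi A_1\|$ presupposes the operating regime $\lambda\rho \leq 1$, since otherwise the absolute value flips the sign. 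I would therefore explicitly note that the theorem is stated in the natural regime where DA is meant to cancel rather than over-subtract, and remark that outside this regime the same algebra still holds with $|1-\lambda\rho|$ in place of $(1-\lambda\rho)$.
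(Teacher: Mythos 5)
Your proposal is correct and follows essentially the same route as the paper's proof: substitute $\|\nabla_\xi A_2\| = \rho\,\|\nabla_\xi A_1\|$ into the Lemma's identity, then specialize $\cos\theta = \pm 1$ to obtain the perfect squares $(1\mp\lambda\rho)^2$. Your additional remarks on the degenerate case $\|\nabla_\xi A_1\|=0$ and on the fact that the $\cos\theta=+1$ branch really yields $|1-\lambda\rho|$ unless $\lambda\rho\le 1$ are careful touches the paper omits (it stops at the squared form), and they only strengthen the write-up.
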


Thus, under negative alignment ($\cos\theta < 0$), DA inherently amplifies perturbation sensitivity. 
This fragility arises not by accident but as a structural byproduct of DA’s cancellation goal.

We now compare DA’s sensitivity to that of standard attention maps.
\begin{theorem}[Relative Sensitivity to Standard Attention]
Let $A_{\text{DA}} = A_1 - \lambda A_2$ and $A_{\text{base}}$ be standard attention. Define $\gamma = \frac{\|\nabla_\xi A_1\|}{\|\nabla_\xi A_{\text{base}}\|}$. Then,
\begin{equation}
\frac{\|\nabla_\xi A_{\text{DA}}\|}{\|\nabla_\xi A_{\text{base}}\|} 
= \gamma \sqrt{1 + \lambda^2 \rho^2 - 2\lambda \rho \cos\theta},
\label{eq:theo2}
\end{equation}
with maximum attained when $\cos\theta=-1$ and minimum when $\cos\theta=+1$.
\end{theorem}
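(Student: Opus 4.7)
The plan is to derive the stated identity as a direct corollary of the preceding Sensitivity Amplification theorem, and then resolve the extremal claim by a one-variable monotonicity argument in $\cos\theta$. Since the earlier theorem already packages the hard work (the quadratic expansion of $\|\nabla_\xi A_{\text{DA}}\|^2$), the proof amounts to careful normalization followed by optimizing a linear function on $[-1,1]$.

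First, I would invoke the identity
\begin{equation*}
\|\nabla_\xi A_{\text{DA}}\|^2 = \|\nabla_\xi A_1\|^2\bigl(1 + \lambda^2 \rho^2 - 2\lambda\rho\cos\theta\bigr)
\end{equation*}
from the previous theorem. Taking square roots (both sides are nonnegative) and dividing by $\|\nabla_\xi A_{\text{base}}\|$ isolates the ratio of interest; substituting the definition $\gamma = \|\nabla_\xi A_1\|/\|\nabla_\xi A_{\text{base}}\|$ yields equation \ref{eq:theo2} verbatim. This step is essentially bookkeeping, contingent only on $\|\nabla_\xi A_{\text{base}}\| \neq 0$, which I would state as a mild nondegeneracy assumption on the base attention at $x$.

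Second, I would treat the factor under the square root as a function $g(c) = 1 + \lambda^2\rho^2 - 2\lambda\rho\, c$ of $c = \cos\theta \in [-1,1]$. Its derivative $g'(c) = -2\lambda\rho$ is strictly negative under the natural regime $\lambda\rho > 0$, so $g$ is monotonically decreasing. Hence its maximum over $[-1,1]$ is $g(-1) = (1+\lambda\rho)^2$ and its minimum is $g(+1) = (1-\lambda\rho)^2$. Because $\sqrt{\cdot}$ is monotone on $[0,\infty)$ and $\gamma$ does not depend on $\theta$, these same extremal configurations govern $\gamma\sqrt{g(c)}$, establishing the extremal claim.

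The only subtle points, rather than genuine obstacles, are the boundary regimes I would flag explicitly. When $\lambda\rho = 1$ and $\cos\theta = +1$, the minimum collapses to zero, signalling complete cancellation of branch gradients; this is the most robust configuration of DA and is consistent with the Fragile Principle's symmetric reading. Conversely, if one allowed $\lambda < 0$ the monotonicity of $g$ would flip, so I would restrict to $\lambda > 0$, matching the initialization $\lambda_{\text{init}} = 0.8$ introduced in the preliminaries. With these remarks, both the identity and the characterization of the extremes follow.
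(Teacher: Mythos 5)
Your proposal matches the paper's own proof: both substitute the identity from Theorem~1, introduce $\gamma$ to normalize by $\|\nabla_\xi A_{\text{base}}\|$, take square roots, and read off the extremes from the sign of the cross term $-2\lambda\rho\cos\theta$. Your added remarks on nondegeneracy ($\|\nabla_\xi A_{\text{base}}\| \neq 0$, $\lambda\rho > 0$) are sensible refinements but do not change the argument.
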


\begin{theorem}[Existence of Amplifying Perturbations]
There exists a perturbation $\xi$ such that
\[
\frac{\|\nabla_\xi A_{\text{DA}}\|}{\|\nabla_\xi A_{\text{base}}\|} > 1
\quad \text{if and only if} \quad
\cos\theta < \frac{1 + \lambda^2 \rho^2 - \gamma^{-2}}{2\lambda \rho}.
\]
\end{theorem}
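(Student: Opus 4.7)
The plan is to derive the claim as a direct algebraic rearrangement of the preceding Relative Sensitivity identity, since that identity already expresses the ratio $\|\nabla_\xi A_{\text{DA}}\|/\|\nabla_\xi A_{\text{base}}\|$ in closed form as a function of $(\gamma, \lambda\rho, \cos\theta)$. Thus the theorem reduces to converting the strict inequality $\text{ratio} > 1$ into an equivalent inequality in $\cos\theta$, together with a brief argument that the amplifying perturbation $\xi$ can actually be realized once the cosine condition is met.

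First I would square both sides of the ratio condition, which is a valid equivalence because both the ratio and $1$ are nonnegative. Applying Theorem~2 this gives
\[
\gamma^{2}\bigl(1 + \lambda^{2}\rho^{2} - 2\lambda\rho\cos\theta\bigr) > 1.
\]
Dividing by $\gamma^{2} > 0$ and isolating the cross-term yields
\[
-2\lambda\rho\cos\theta > \gamma^{-2} - 1 - \lambda^{2}\rho^{2},
\]
and dividing by $-2\lambda\rho < 0$ (using $\lambda, \rho > 0$) flips the inequality and produces exactly the stated bound on $\cos\theta$. Each step is an equivalence, so reading the chain in reverse gives the converse implication.

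For the existence aspect, once the cosine condition holds for the input-space quantities $(\gamma, \rho, \theta)$ determined by the Jacobians of $A_1$, $A_2$, and $A_{\text{base}}$ at the clean input $x$, the amplification is realized by any $\xi$ aligned with the effective differential gradient $\nabla_x A_1 - \lambda \nabla_x A_2$, i.e.\ the direction that saturates the Cauchy--Schwarz bound in the directional derivative of $A_{\text{DA}}$. Conversely, if no $\xi$ amplifies the ratio above $1$, then even the worst-case direction fails to satisfy the squared-norm inequality, which by the equivalence above forces the cosine condition to fail.

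The main thing to watch is sign bookkeeping during the rearrangement: the cross-term $-2\lambda\rho\cos\theta$ switches sign upon isolation, and one must explicitly invoke $\lambda, \rho > 0$ to both divide and flip the inequality in a single step. A minor subtlety is the degenerate boundary $\gamma = 0$ or $\rho = 0$, corresponding to vanishing input sensitivity of one branch, where the defining ratios become ill-posed; these cases should be excluded by convention, consistent with the implicit setup of Theorems~1 and~2.
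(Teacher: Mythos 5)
Your algebraic core is correct and, in fact, more explicit than what the paper writes down: squaring the ratio (both sides nonnegative), invoking Theorem~2, dividing by $\gamma^{2}>0$, and dividing by $-2\lambda\rho<0$ with the sign flip gives exactly the chain of equivalences $\|\nabla_\xi A_{\text{DA}}\|/\|\nabla_\xi A_{\text{base}}\|>1 \iff \gamma^{2}\bigl(1+\lambda^{2}\rho^{2}-2\lambda\rho\cos\theta\bigr)>1 \iff \cos\theta<\frac{1+\lambda^{2}\rho^{2}-\gamma^{-2}}{2\lambda\rho}$, and your caveats about $\lambda,\rho>0$ and the degenerate $\gamma=0$ or $\rho=0$ cases are appropriate. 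However, your route differs from the paper's: the paper's proof omits this rearrangement entirely (treating it as implicit in Theorem~2) and instead spends its effort on the existential quantifier, arguing by continuity and the intermediate value theorem that, since $\theta$ varies continuously as $\xi$ ranges over an $\epsilon$-ball, some direction of $\xi$ attains an angle satisfying the condition. You instead read $(\gamma,\rho,\theta)$ as fixed quantities determined by the Jacobians at the clean input, under which the gradient-norm ratio does not depend on $\xi$ at all, so the existence clause becomes essentially vacuous and your Cauchy--Schwarz alignment paragraph is decorative rather than load-bearing (aligning $\xi$ with $\nabla_x A_1-\lambda\nabla_x A_2$ maximizes the change in $A_{\text{DA}}$, not the ratio of gradient norms that the theorem actually bounds). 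The trade-off: your version nails the ``if and only if'' as a clean equivalence with careful sign bookkeeping, which is the substantive mathematical content; the paper's version takes the quantifier over $\xi$ seriously (letting $\theta$ depend on $\xi$) but is hand-wavy, since it never verifies that the required angle is actually attained within the ball, and it ignores that $\rho$ and $\gamma$ would then also vary with $\xi$, making the right-hand side of the condition itself $\xi$-dependent. Neither proof fully resolves that ambiguity in the statement, but your argument is sound under the fixed-gradient reading and supplies the derivation the paper leaves implicit.
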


This condition delineates the $(\rho,\theta)$ region where DA exhibits strictly higher sensitivity than standard attention. 
Since $\rho$ and $\theta$ can be adversarially controlled, DA’s subtractive design exposes a structural vulnerability.

\textbf{Implication for Lipschitz constants.}
To assess how the above gradient amplification affects robustness, we consider the \emph{local Lipschitz constant}, which captures the worst-case sensitivity of the attention map to small perturbations. 
Relating DA’s gradient behavior to Lipschitz continuity reveals that subtraction structurally shifts sensitivity upward, often increasing local Lipschitz values and weakening robustness.

Building on the definition of the local Lipschitz constant,
\begin{equation}
L(x) = \sup_{\xi \neq 0}\frac{\|A(x+\xi)-A(x)\|_2}{\|\xi\|},    
\label{eq:lip}
\end{equation}
we derive the following bound:
\begin{lemma}
Let $L_{\text{DA}}(x)$ and $L_{\text{base}}(x)$ be the local Lipschitz constants of $A_{\text{DA}}$ and $A_{\text{base}}$. Then,
\[
\frac{L_{\text{DA}}(x)}{L_{\text{base}}(x)} 
\leq \gamma \sqrt{1 + \lambda^2 \rho^2 - 2\lambda \rho \cos\theta}.
\]
\end{lemma}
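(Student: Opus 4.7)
The plan is to lift the pointwise gradient identity established in Theorem~2 to the operator-norm (Lipschitz) level. First I would invoke smoothness of the softmax-based attention maps so that, by first-order Taylor expansion, the local Lipschitz constant defined in \eqref{eq:lip} coincides with the operator norm of the Jacobian:
\[
L(x) \;=\; \|J_A(x)\|_{\mathrm{op}} \;=\; \sup_{\|\xi\|=1} \|J_A(x)\,\xi\|.
\]
This converts the original definition---a supremum over finite perturbations---into a supremum over unit directions and reduces the problem to comparing $J_{A_{\text{DA}}}$ and $J_{A_{\text{base}}}$.

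Next, using the linearity of differentiation, $J_{A_{\text{DA}}}(x) = J_{A_1}(x) - \lambda J_{A_2}(x)$. Interpreting $\nabla_\xi A_i$ of Theorem~2 as the directional Jacobian action $J_{A_i}(x)\xi$, the same algebraic identity derived there applies verbatim to each unit direction, yielding
\[
\|J_{A_{\text{DA}}}(x)\,\xi\| \;=\; \gamma_\xi\,\|J_{A_{\text{base}}}(x)\,\xi\|\,\sqrt{\,1 + \lambda^2 \rho_\xi^2 - 2\lambda \rho_\xi \cos\theta_\xi\,},
\]
where $\rho_\xi, \theta_\xi, \gamma_\xi$ are the direction-dependent instantiations of the quantities introduced in the previous theorems. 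Taking the supremum over unit $\xi$, factoring out the square-root amplification term, and bounding the remaining factor by $\sup_\xi \|J_{A_{\text{base}}}(x)\xi\| = L_{\text{base}}(x)$ then produces the claimed ratio inequality.

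The main obstacle is that $\rho, \theta, \gamma$ genuinely depend on $\xi$, so the ratio $L_{\text{DA}}/L_{\text{base}}$ is not literally the value of the amplification factor in a single direction, and the direction $\xi^\star$ that maximizes $\|J_{A_{\text{DA}}}\xi\|$ need not coincide with the one that maximizes $\|J_{A_{\text{base}}}\xi\|$. I would resolve this by reading $\rho, \theta, \gamma$ in the lemma as the worst-case constants over the unit sphere---suprema that exist by continuity on the compact sphere---which gives a valid (possibly loose) upper bound. Equivalently, one can specialize to the direction $\xi^{\star}$ realizing $L_{\text{DA}}(x)$ and interpret the bound as holding with $\rho_{\xi^\star}, \theta_{\xi^\star}, \gamma_{\xi^\star}$; either reading yields the stated inequality and preserves the structural message that negative gradient alignment ($\cos\theta<0$) inflates the Lipschitz ratio above one.
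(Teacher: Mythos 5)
Your proposal is correct and follows essentially the same route as the paper: identify the local Lipschitz constants with gradient/Jacobian norms under the paper's local-linearization assumption, then apply the amplification identity of Theorem~2 to obtain the ratio bound. In fact, your explicit treatment of the direction dependence of $\rho$, $\theta$, $\gamma$ (via worst-case constants or the maximizing direction $\xi^\star$ for $A_{\text{DA}}$, with $\|J_{A_{\text{base}}}\xi^\star\|\le L_{\text{base}}(x)$) is more careful than the paper's one-line argument, which silently treats these quantities as fixed scalars and divides the two gradient-norm bounds.
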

This inequality shows explicitly how the subtraction weight $\lambda$ and the gradient alignment angle $\theta$ together govern DA’s local sensitivity.
We further analyze the certifiable robustness radius in Appendix~\ref{sec:certified}, extending this discussion.

\subsection{Depth-Dependent Robustness via Noise Cancellation}
\label{subsec:depth}

\textbf{Noise cancellation as a structural property of DA.}
DA is designed with a subtractive form,
$A_{\text{DA}} = A_1 - \lambda A_2$,
which suppresses components common to both $A_1$ and $A_2$. 
This \emph{noise cancellation effect} is independent of gradient alignment: 
it does not rely on whether $\nabla A_1$ and $\nabla A_2$ are aligned or anti-aligned, but rather on the structural subtraction that systematically reduces shared activations or perturbations. 
The strength of this effect is governed primarily by the coefficient $\lambda$.  

\textbf{Accumulation across depth.}
When DA layers are stacked, this cancellation compounds: 
shared noise that survives one layer is more likely to be attenuated again in subsequent layers. 
In contrast, standard attention lacks such a subtractive mechanism, so its effective perturbation propagation is closer to a simple accumulation across depth.  
Formally, if $f^{(d)}$ denotes the $d$-th DA layer with local Lipschitz constant $L_{\mathrm{DA}}^{(d)}$, and $\alpha^{(d)} \in (0,1]$ denotes the average reduction factor from noise cancellation at layer $d$, then
\[
\|\Delta^{(d)}\| \le \alpha^{(d)} L_{\mathrm{DA}}^{(d)} \|\Delta^{(d-1)}\|, 
\qquad \Delta^{(0)} = \xi.
\]

\textbf{Consequence (upper bound).}
By recursion, let $F = f^{(D)} \circ \cdots \circ f^{(1)}$ denote the $D$-layer composition. Then,
\begin{equation}
\|F(x+\xi)-F(x)\|
\;\le\;
\Bigg(\prod_{d=1}^{D} \alpha^{(d)} L_{\mathrm{DA}}^{(d)} \Bigg)\,\|\xi\|
\;=\;
(\bar{\alpha}\,\bar{L}_{\mathrm{DA}})^{D}\,\|\xi\|,
\end{equation}
where $\bar{\alpha}$ and $\bar{L}_{\mathrm{DA}}$ denote the geometric means across layers.
If $\bar{\alpha}\bar{L}_{\mathrm{DA}} < 1$, small perturbations are attenuated with depth, yielding robustness gains that are \emph{specific to DA}.

We note that $\alpha$ is not an independent hyperparameter but rather an emergent factor that is influenced by $\lambda$ as well as the distribution of activations across layers. Intuitively, larger $\lambda$ magnifies the subtractive effect, making it statistically more likely that perturbations are partially cancelled across layers, and thus biases $\alpha$ downward. However, the exact value of $\alpha$ is also shaped by training dynamics, and cannot be reduced to a deterministic function of $\lambda$.

\begin{theorem}[Depth-Dependent Sensitivity of Standard Attention vs.\ DA]
For standard attention, perturbations propagate as
\[
\|\Delta^{(D)}\| \le (\bar L_{\text{base}})^D \|\xi\|,
\]
with no cancellation effect ($\alpha^{(d)} \approx 1$), unlike DA’s structural differentiation.
For DA, the bound becomes
\[
\|\Delta^{(D)}\| \le (\bar \alpha \,\bar L_{\text{DA}})^D \|\xi\|,
\]
where $\bar\alpha < 1$ reflects structural noise cancellation.
\end{theorem}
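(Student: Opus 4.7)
The plan is to reduce the depth-$D$ statement to a per-layer Lipschitz bound and then telescope the recursion. First I would fix notation: let $\Delta^{(d)} = f^{(d)}(\cdots f^{(1)}(x+\xi)) - f^{(d)}(\cdots f^{(1)}(x))$ be the perturbation at the output of layer $d$, with $\Delta^{(0)} = \xi$. The statement is about an upper bound on $\|\Delta^{(D)}\|$ under the composition $F = f^{(D)}\circ\cdots\circ f^{(1)}$, so the natural route is (i) establish a per-layer contraction inequality, (ii) iterate it, and (iii) rewrite the resulting product as a $D$-th power of a geometric mean.

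For the per-layer step I would treat the two cases separately. For standard attention, the local Lipschitz definition in \eqref{eq:lip} directly gives $\|\Delta^{(d)}\| \le L_{\text{base}}^{(d)}\|\Delta^{(d-1)}\|$, with no structural mechanism that shrinks $\Delta^{(d-1)}$ before the Lipschitz bound is applied; this corresponds to $\alpha^{(d)} \approx 1$. For DA, I would start from the bound on $L_{\text{DA}}(x)$ established in the previous lemma and observe that the subtractive form $A_1 - \lambda A_2$ additionally removes the component of the perturbation that is shared between the two branches. Formalizing this as a multiplicative attenuation factor $\alpha^{(d)} \in (0,1]$ on the per-layer Lipschitz bound yields $\|\Delta^{(d)}\| \le \alpha^{(d)} L_{\text{DA}}^{(d)} \|\Delta^{(d-1)}\|$, which is exactly the recursion already stated in the text.

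Iterating the two recursions from $d=1$ to $D$ and using $\Delta^{(0)}=\xi$ gives the products $\prod_d L_{\text{base}}^{(d)}$ and $\prod_d \alpha^{(d)} L_{\text{DA}}^{(d)}$ respectively. Identifying $\bar L_{\text{base}} = \bigl(\prod_d L_{\text{base}}^{(d)}\bigr)^{1/D}$, and analogously $\bar L_{\text{DA}}$ and $\bar\alpha$, rewrites the products as $(\bar L_{\text{base}})^{D}$ and $(\bar\alpha\,\bar L_{\text{DA}})^{D}$, which is exactly the pair of bounds in the statement.

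The main obstacle, and the place where the proof is genuinely non-routine, is the justification that the per-layer cancellation factor $\alpha^{(d)}$ can be taken strictly below $1$ on average for DA while remaining essentially $1$ for standard attention. The naive Lipschitz bound already absorbs all local behavior into $L_{\text{DA}}^{(d)}$, so one has to be careful not to double-count: the claim is that the subtraction $A_1 - \lambda A_2$ systematically removes a portion of the perturbation energy that is common to both branches \emph{before} the residual is subject to the branchwise Lipschitz amplification. I would make this precise by decomposing $\Delta^{(d-1)}$ into its shared and orthogonal components with respect to the two branches and bounding the shared component's contribution via the coefficient $\lambda$, following the discussion connecting $\alpha$ to $\lambda$ in Section~\ref{subsec:depth}. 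The final step is then to argue that taking geometric means over $d$ preserves the strict inequality $\bar\alpha < 1$ under the mild assumption that at least a constant fraction of layers exhibit nontrivial shared activation, which is what gives DA its depth-dependent robustness advantage over standard attention.
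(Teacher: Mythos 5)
Your proposal matches the paper's own argument: the paper likewise takes the per-layer recursions $\|\Delta^{(d)}\| \le L_{\text{base}}^{(d)}\|\Delta^{(d-1)}\|$ and $\|\Delta^{(d)}\| \le \alpha^{(d)} L_{\text{DA}}^{(d)}\|\Delta^{(d-1)}\|$ as the starting point, unrolls them from $\Delta^{(0)}=\xi$, and rewrites the products as $(\bar L_{\text{base}})^{D}$ and $(\bar\alpha\,\bar L_{\text{DA}})^{D}$ via geometric means. The only difference is that where you propose a shared/orthogonal decomposition to justify $\alpha^{(d)}<1$, the paper simply asserts that the subtractive structure biases $\alpha^{(d)}<1$ (treating $\alpha$ as an emergent, empirically motivated factor), so your extra step is a refinement rather than a divergence.
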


\begin{corollary}[Crossover in Robustness]
If $\bar L_{\text{DA}} > \bar L_{\text{base}}$ (DA is locally more sensitive) but $\bar\alpha < 1$ (nontrivial cancellation), then there exists a depth threshold $D^\ast$ such that
\[
(\bar\alpha \,\bar L_{\text{DA}})^{D^\ast} = (\bar L_{\text{base}})^{D^\ast}.
\]
For $D < D^\ast$, DA is more fragile than standard attention;  
for $D > D^\ast$, DA becomes asymptotically more robust.
\end{corollary}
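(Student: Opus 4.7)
The plan is to reduce the crossover claim to an elementary sign argument on the log-ratio of the two depth-propagation bounds from the preceding theorem. Both bounds are exponential in $D$, so taking logarithms linearizes the comparison and the existence of $D^\ast$ becomes a one-dimensional intermediate-value argument.

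First I would form $g(D) := \log B_{\text{DA}}(D) - \log B_{\text{base}}(D)$, using $B_{\text{base}}(D) = (\bar L_{\text{base}})^D$ from Theorem~(depth-dependent sensitivity) and the composed DA bound $B_{\text{DA}}(D)$ that folds in both the per-layer amplification $\bar L_{\text{DA}}$ and the emergent cancellation factor $\bar\alpha$. At $D=1$, before cancellation has accumulated across layers, the relevant rate is the unattenuated $\bar L_{\text{DA}}$, and the preceding sensitivity-amplification theorems supply $\bar L_{\text{DA}} > \bar L_{\text{base}}$, giving $g(1) > 0$. For large $D$, in the cancellation-dominant sub-regime $\bar\alpha \bar L_{\text{DA}} < \bar L_{\text{base}}$ (implicit in the hypotheses $\bar L_{\text{DA}} > \bar L_{\text{base}}$ with $\bar\alpha<1$ small enough), the effective per-layer rate becomes $\bar\alpha \bar L_{\text{DA}}$, so $g(D) \to -\infty$ with slope $\log(\bar\alpha \bar L_{\text{DA}} / \bar L_{\text{base}}) < 0$. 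Continuity of $g$ in $D$, extended to $D\in\mathbb{R}_{>0}$, then yields a unique crossing $D^\ast$ with $g(D^\ast)=0$, which rearranges to the claimed equality $(\bar\alpha\bar L_{\text{DA}})^{D^\ast} = (\bar L_{\text{base}})^{D^\ast}$. The strict monotonicity of $g$ on either side of $D^\ast$ immediately delivers the fragility/robustness dichotomy.

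The main obstacle is conceptual rather than algebraic: the depth-propagation bound $(\bar\alpha \bar L_{\text{DA}})^D$ implicitly treats cancellation as present from the first layer, whereas the single-layer fragility result carries no cancellation factor at all. I would handle this by interpreting $\bar\alpha$ as a geometric mean of layer-wise factors that is close to one at shallow depths (little cancellation has accumulated) and settles strictly below one as depth grows, giving the transition from $\bar L_{\text{DA}}$-dominated to $\bar\alpha \bar L_{\text{DA}}$-dominated behavior that the crossover picks up. A secondary subtlety is that $\bar\alpha$ is an emergent, training-dependent quantity tied to $\lambda$ and the activation distribution rather than a free parameter, so the corollary is substantive only when the learned $\bar\alpha$ actually lands in the sub-regime $\bar\alpha \bar L_{\text{DA}} < \bar L_{\text{base}}$; outside that regime no finite $D^\ast$ exists and DA remains uniformly more fragile than standard attention.
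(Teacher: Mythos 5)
The paper offers no proof of this corollary at all---it is stated as an immediate consequence of Theorem~4, and the appendix stops at Theorem~4---so there is nothing to compare your argument against except the corollary's own statement and the model of Section~4.2. Measured against that, your proposal correctly diagnoses the central problem but does not close it. With the quantities as the paper defines them ($\bar\alpha$, $\bar L_{\text{DA}}$, $\bar L_{\text{base}}$ depth-independent geometric means), the equation $(\bar\alpha\,\bar L_{\text{DA}})^{D^\ast}=(\bar L_{\text{base}})^{D^\ast}$ has no positive solution unless $\bar\alpha\,\bar L_{\text{DA}}=\bar L_{\text{base}}$ exactly: two exponentials with distinct positive bases agree only at $D=0$. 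Moreover the stated hypotheses ($\bar L_{\text{DA}}>\bar L_{\text{base}}$ and $\bar\alpha<1$) do not by themselves place you on either side of that equality, so no crossover in the two bounds of Theorem~4 follows from them; if $\bar\alpha\,\bar L_{\text{DA}}<\bar L_{\text{base}}$ the DA bound is smaller for \emph{every} $D\ge 1$, and if $\bar\alpha\,\bar L_{\text{DA}}>\bar L_{\text{base}}$ it is larger for every $D\ge 1$. You see this, which is good, but your repair---treating $\bar\alpha$ as a depth-dependent aggregate that is near $1$ at $D=1$ and drifts below $\bar L_{\text{base}}/\bar L_{\text{DA}}$ as $D$ grows---proves a different statement under hypotheses the corollary does not contain (continuity and monotonicity of your $g(D)$, the shallow-depth behavior of $\bar\alpha$, and the eventual cancellation-dominant regime).

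Two specific steps would fail even under your reinterpretation. First, $g(1)>0$ is not available: in the paper's model every DA layer carries its own factor $\alpha^{(1)}\le 1$ (Section~4.2 presents cancellation as a per-layer structural property governed by $\lambda$, not as something that only accumulates after several layers), so the depth-$1$ DA bound is $\alpha^{(1)}L_{\text{DA}}^{(1)}$, which can already sit below $L_{\text{base}}^{(1)}$; the single-layer fragility results (Theorems~1--3) concern gradient norms of the attention map, not the $\alpha$-attenuated propagation bound, and cannot be imported to force $g(1)>0$. Second, your claim of a \emph{unique} crossing via strict monotonicity of $g$ on either side of $D^\ast$ is an additional assumption on how the depth-dependent $\bar\alpha$ evolves, not something derivable from the paper's setup. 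The honest conclusion is that the corollary, as literally written, is degenerate, and any rigorous version must either (i) add the hypothesis $\bar\alpha\,\bar L_{\text{DA}}<\bar L_{\text{base}}$ and restate the conclusion as ``the DA upper bound is eventually, and in fact for all $D\ge 1$, below the baseline bound,'' or (ii) adopt your depth-varying $\bar\alpha(D)$ model and state its required shallow/deep behavior explicitly as assumptions. Your proposal gestures at (ii) but presents the needed assumptions as interpretation rather than hypotheses, so as a proof of the stated corollary it has a genuine gap---albeit one inherited from the statement itself.
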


\textbf{Interpretation.}
This analysis highlights two independent mechanisms in DA:  
(i) negative gradient alignment, which locally amplifies fragility, and  
(ii) noise cancellation, which systematically attenuates shared perturbations and strengthens with depth.  
The coexistence of these forces explains why DA can appear fragile at shallow depths yet exhibit improved robustness when scaled deeper.
In particular, perturbations amplified in early layers do not propagate unchanged: each subsequent DA layer introduces its own cancellation step, so the perturbation energy is progressively reduced as depth increases. This cumulative attenuation provides a simple explanation for the empirical trend we observe—strong fragility in shallow models and partial robustness in deeper stacks—without requiring additional architectural assumptions.

\subsection{Limitations of the Theoretical Analysis}
\label{sec:theo_lim}

While our analysis offers a structural understanding of the sensitivity amplification in DA, it is based on a few simplifying assumptions that may not always hold in practice:

\textbf{Local linear approximation via input gradients.} Our derivation relies on local gradient-based sensitivity—i.e., assuming that local behavior of $A_{\text{DA}}$ can be well-approximated by its gradient. This is valid under small perturbations, but may not fully capture global nonlinear effects in deep networks.
This local first-order assumption follows the standard practice in prior analyses of attention robustness (e.g., \cite{DBLP:conf/icml/KimPM21, DBLP:conf/icml/DasoulasSV21}), which similarly characterize sensitivity through gradients under small perturbations. This regime matches typical adversarial settings (e.g., PGD or CW with $\epsilon \le 8/255$), where first-order effects dominate the model’s response. Our theoretical results should therefore be interpreted as local sensitivity characterizations rather than global guarantees.

\textbf{Layer isolation.} We analyze DA in isolation, holding other network layers fixed. In practice, interactions with downstream modules may either mitigate or exacerbate sensitivity.

These assumptions are common in robustness analyses. They clarify how DA’s design induces fragility, but also underscore the need for empirical validation (Section~\ref{sec:experiments}).

\section{Experiments}\label{sec:experiments}

We conduct extensive experiments to validate our theoretical analysis of the Fragile Principle in DA. 
Specifically, we evaluate: (i) adversarial vulnerability measured by attack success rate (ASR), (ii) the frequency of negative gradient alignment ($\cos(\nabla_{\xi} A_1, \nabla_{\xi} A_2) < 0$), and (iii) empirical estimates of the local Lipschitz constant. 
We also examine how these effects vary with model depth.

\textbf{Scope of our evaluation.}
Our experiments are designed to isolate the architectural effect of DA’s subtractive structure. For this reason, we train and evaluate each model on the same dataset and do not incorporate robustness-enhancing training schemes, which would change many other aspects of the representation and thus confound the analysis of DA itself. Similarly, we focus on well-established and broadly used attack baselines (PGD~\citep{DBLP:conf/iclr/MadryMSTV18}, AutoAttack~\citep{DBLP:conf/icml/Croce020a}, Carlini--Wagner (CW)~\citep{DBLP:conf/sp/Carlini017}) rather than task- or model-specific adversarial methods, since the latter may introduce additional factors unrelated to the mechanism we study.

\subsection{Experimental Setup}
\label{sec:exp_setup}

\textbf{Models.}  
We evaluate two classes of models: (1) ViT and DiffViT trained from  for controlled studies, and (2) pre-trained CLIP (ViT-B/16)~\citep{DBLP:conf/icml/RadfordKHRGASAM21} and its DA-based variant DiffCLIP~\citep{DBLP:journals/tmlr/HammoudG25}.  
The ViT/DiffViT models are lightweight transformers with a $D$-layer Attention or Differential Attention (DA) module, where $D \in \{1,2,4,8,12\}$. Unless otherwise stated, we use $D=1$. The subtraction coefficient $\lambda$ is initialized at $\lambda_{\text{init}}=0.8$ and updated during training, following~\citep{ye2025differential}.  
DiffCLIP extends CLIP by replacing standard attention with DA, while preserving CLIP’s contrastive training objective between image and text. This substitution allows DiffCLIP to suppress redundant activations and sharpen focus on discriminative regions. In addition,~\citep{DBLP:journals/tmlr/HammoudG25} introduce DiffViT, a ViT variant that employs DA.

\textbf{Datasets.}  
For ViT and DiffViT, we train on CIFAR-10 and CIFAR-100~\citep{krizhevsky2009learning}, consisting of 10 and 100 classes of $32\times32$ natural images. 
We also include Tiny ImageNet~\citep{le2015tiny} for ViT and DiffViT with depth 4, with results reported in Appendix. 
For CLIP and DiffCLIP, we use the MSCOCO 2014 validation set~\citep{DBLP:conf/eccv/LinMBHPRDZ14} (40k images annotated by 80 categories) and the ImageNet-1k validation set~\citep{DBLP:conf/cvpr/DengDSLL009} (50k images across 1,000 categories).

\textbf{Adversarial Attacks.}  
We consider PGD~\citep{DBLP:conf/iclr/MadryMSTV18}, Carlini--Wagner (CW)~\citep{DBLP:conf/sp/Carlini017}, and AutoAttack (AA)~\citep{DBLP:conf/icml/Croce020a}. 
PGD is applied both as an adversarial example attack and as an adversarial patch attack. 
For CLIP and DiffCLIP, perturbations minimize the cosine similarity between the perturbed image encoding $\phi_{\text{visual}}(x+\delta)$ and its text prompt embedding $\phi_{\text{text}}(t)$, where $t$ is a class description (e.g., ``a photo of $<$class$>$''). 
For ViT and DiffViT, perturbations instead minimize cross-entropy loss to induce misclassification. 
PGD and AutoAttack are constrained under $\ell_\infty$ norm, while CW is applied under $\ell_2$. 
Patch attacks restrict $\delta$ to a square of size $w \times w$ at a random image location.  

All experiments are run on a single NVIDIA H100 GPU. 
Implementation details and hyperparameters are provided in Appendix~\ref{sec:impl}.

\subsection{Attack Success Rate}
\label{sec:exp_asr}

\begin{figure}
\centering
\subfigure[DiffViT and ViT under Adversarial Examples]{
    \includegraphics[width=0.48\textwidth]{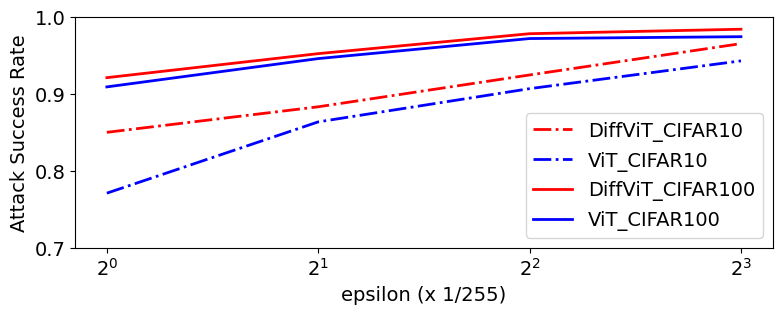}
    \label{fig:asr_diffvit_eps}
}
\hfill
\subfigure[DiffCLIP and CLIP under Adversarial Examples]{
    \includegraphics[width=0.48\textwidth]{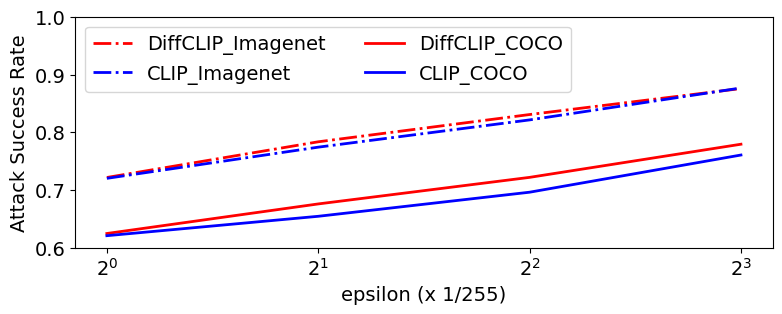}
    \label{fig:asr_diffclip_eps}
}
\subfigure[DiffViT and ViT under Adversarial Patches]{
    \includegraphics[width=0.48\textwidth]{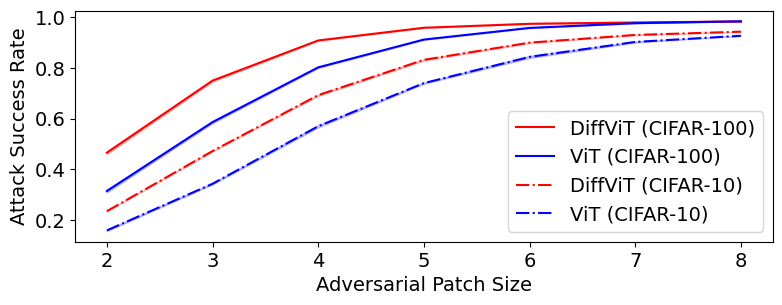}
    \label{fig:asr_diffvit_patch}
}
\hfill
\subfigure[DiffCLIP and CLIP under Adversarial Patches]{
    \includegraphics[width=0.48\textwidth]{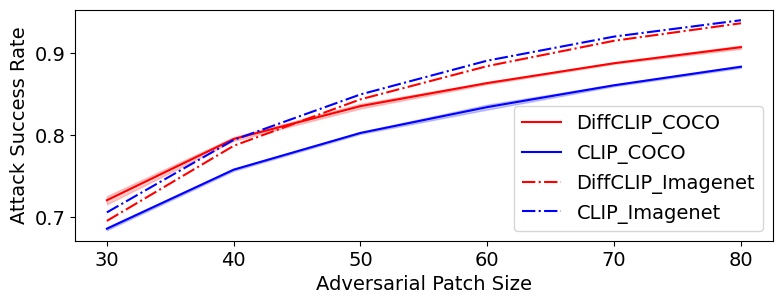}
    \label{fig:asr_diffclip_patch}
}
\caption{
ASR under Adversarial Examples and Patches crafted with PGD. DiffViT and DiffCLIP generally exhibit higher or comparable ASR compared to standard attention, with the gap most pronounced in small-class datasets (CIFAR and COCO), while narrowing on large-scale ones (Imagenet).
}
\end{figure}

To assess adversarial vulnerability, we compute the attack success rate (ASR):
\[
\text{ASR} = \tfrac{\# \text{successful attacks}}{\# \text{trials}}.
\]
Since we focus on untargeted attacks, a perturbation is successful if it changes the prediction relative to the clean input.

\textbf{Controlled Study on ViT/DiffViT (Single Layer).}
Figure~\ref{fig:asr_diffvit_eps} and~\ref{fig:asr_diffvit_patch} show ASR under adversarial examples and patch attacks, respectively.
Across all cases, DiffViT exhibits higher ASR than ViT, consistent with the Fragile Principle. 
We also vary $\lambda_{\text{init}}$, which controls the subtraction strength in DA. 
Table~\ref{tab:asr_lam_var} shows ASR on CIFAR-10 increases steadily up to $\lambda_{\text{init}}=0.8$, after which it declines, suggesting over-subtraction reduces vulnerability.

\textbf{Controlled Study on ViT/DiffViT (Multiple Layers).}
Our theory predicts that although a single DA layer embodies the Fragile Principle by amplifying sensitivity, stacking layers reverses this trend through a cancellation effect, producing depth-dependent robustness.
Figures~\ref{fig:l2perturb_cw} support this: 
under CW, deeper DiffViTs require larger perturbations to reach ASR=100\% (Fig.~\ref{fig:l2perturb_cw}).
Under PGD and AutoAttack, those ASRs are highest at depth~1 but decreases with depth for $\epsilon{=}1/255$, surpassing ViT’s robustness (Fig.~\ref{fig:asr_pgd_depth}); at $\epsilon{=}4/255$, both saturate at high ASR. 
We report additional experiments on Tiny ImageNet in Appendix~\ref{subsec:tinyimagenet}, which show the same trend.

\textbf{Pretrained Models (CLIP/DiffCLIP).}
Figure~\ref{fig:asr_diffclip_eps} and \ref{fig:asr_diffclip_patch} report ASR for CLIP and DiffCLIP evaluated on COCO and ImageNet. 
On COCO, DiffCLIP consistently shows higher vulnerability across perturbation budgets and patch sizes, supporting the Fragile Principle. 
On ImageNet, results are more nuanced: under adversarial examples, DiffCLIP remains slightly more vulnerable, whereas under patch attacks CLIP marginally outperforms DiffCLIP, though the difference is small. 
This suggests that dataset scale and diversity can modulate the manifestation of DA’s fragility.

\begin{table}[t]
\centering
\caption{Attack success rates and clean accuracy of DiffViT on CIFAR-10 when varying $\lambda_{\text{init}}$. ASR increases with $\lambda_{\text{init}}$ up to 0.8. Beyond that point, ASR drops, suggesting over-cancellation.}
\begin{tabular}{lcccccc}
\toprule
$\lambda_{\text{init}}$ & 0.5 & 0.7 & 0.8 (default) & 0.85 & 0.9 & 0.95 \\
\midrule
Accuracy & 0.8605 & 0.8697 & \textbf{0.8700} & 0.8567 & 0.8524 & 0.8468 \\
ASR ($\epsilon=1/255$) & 0.4074 & 0.6772 & \textbf{0.8498} & 0.7531 & 0.4956 & 0.4164 \\
\bottomrule
\end{tabular}
\label{tab:asr_lam_var}
\end{table}

\begin{figure}[t]
\centering
\subfigure[ASR under PGD]{
    \includegraphics[width=0.48\textwidth]{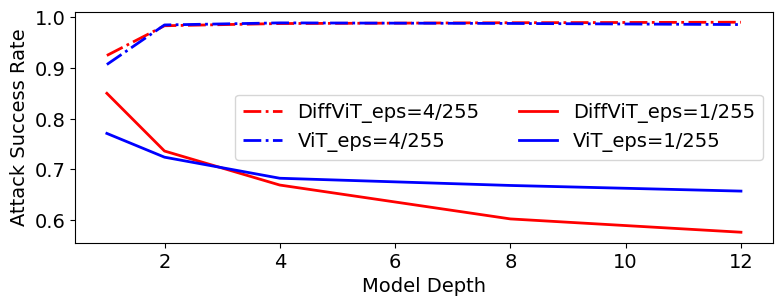}
    \label{fig:asr_pgd_depth}
}
\hfill
\subfigure[ASR under AutoAttack]{
    \includegraphics[width=0.48\textwidth]{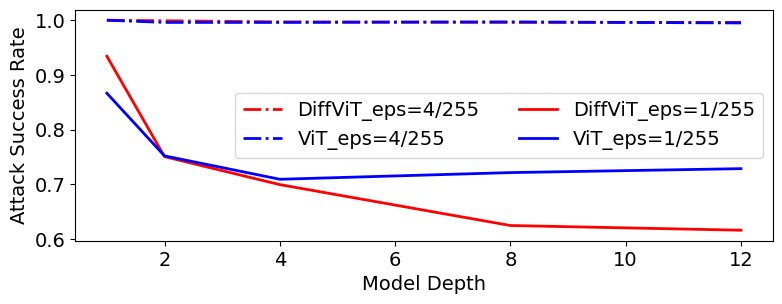}
    \label{fig:asr_aa_depth}
}
\subfigure[L2 Norm of Perturbation under CW]{
    \includegraphics[width=0.48\textwidth]{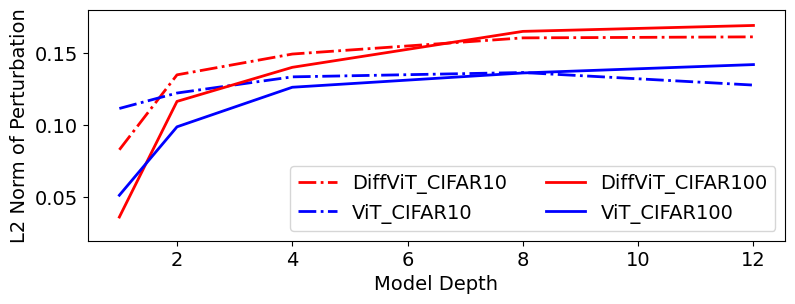}
    \label{fig:l2perturb_cw}
}
\hfill
\subfigure[Mean Local Lipschitz]{
    \includegraphics[width=0.48\textwidth]{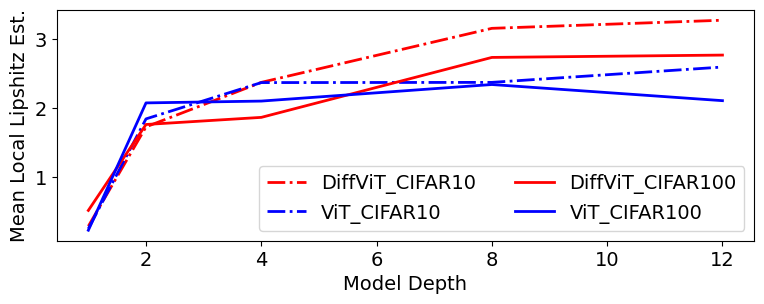}
    \label{fig:lip_depth}
}
\caption{
Depth-dependent effects of DA.
(a) (b) Under PGD and AutoAttack, DA is fragile at depth~1--2, but ASR drops with depth for $\epsilon{=}1/255$; at $\epsilon{=}4/255$ both converge to high ASR.
(c) Under CW-L2, deeper models require larger perturbations to reach 100\% ASR.
(d) Mean local Lipschitz estimates over all layers rise with depth, indicating higher local sensitivity.
}
\label{fig:depth_cancellation}
\end{figure}

\subsection{Sensitivity Amplification under Perturbations}
\label{sec:exp_sensitivity}

We next test whether DA amplifies local sensitivity. 
We estimate the local Lipschitz constant (\ref{eq:lip}), sampling $\xi$ uniformly from $\ell_\infty$-bounded noise with $\|\xi\|\le 8/255$.

Figure~\ref{fig:lip_depth} reports the \emph{mean local Lipschitz estimate}, computed by averaging the local Lipschitz constants across all layers of each model. 
We observe that deeper DiffViT models consistently yield higher per-layer Lipschitz values, indicating that local sensitivity increases with depth.
Figure~\ref{fig:lip_est} further compares ViT/DiffViT and CLIP/DiffCLIP.
In all settings, DA-equipped models exhibit significantly higher Lipschitz estimates than baselines, with maxima at layers using larger $\lambda$.
Interestingly, later layers often display lower sensitivity than earlier ones, suggesting that the cumulative dynamics of DA distribute perturbation effects unevenly across depth rather than simply compounding them.
The wider spread of values in DA models highlights their more variable sensitivity across layers.

These results confirm that DA’s subtractive design can both amplify and reduce sensitivity, aligning with our theoretical claims. 
The amplified sensitivity facilitates adversarial perturbations, underscoring risks in safety-critical domains.

\begin{figure}[t]
\centering
\subfigure[DiffCLIP and CLIP]{
    \includegraphics[width=0.64\textwidth]{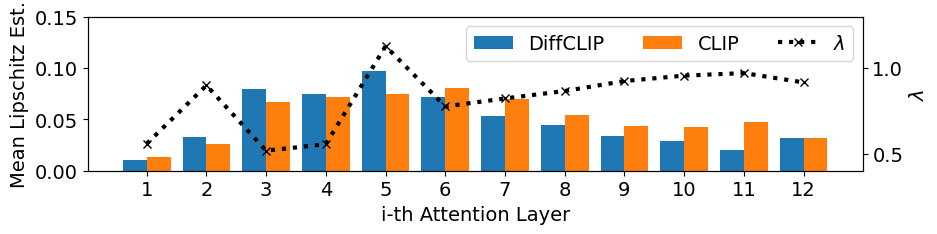}
}\hfill
\subfigure[CIFAR-10]{
    \includegraphics[width=0.15\textwidth]{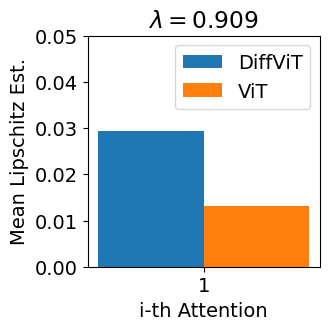}
}\hfill
\subfigure[CIFAR-100]{
    \includegraphics[width=0.15\textwidth]{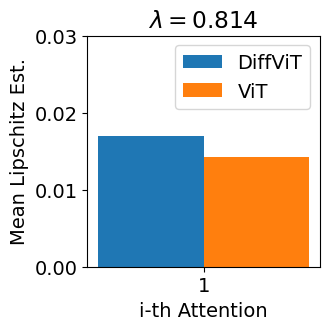}
}
\caption{Mean Lipschitz estimates of attention layers under input perturbations.
Models incorporating DA exhibit the highest values among all attention layers, particularly at layers with larger $\lambda$.}
\label{fig:lip_est}
\end{figure}

\begin{figure}[t]
\centering
\subfigure[DiffCLIP]{
    \includegraphics[width=0.6\textwidth]{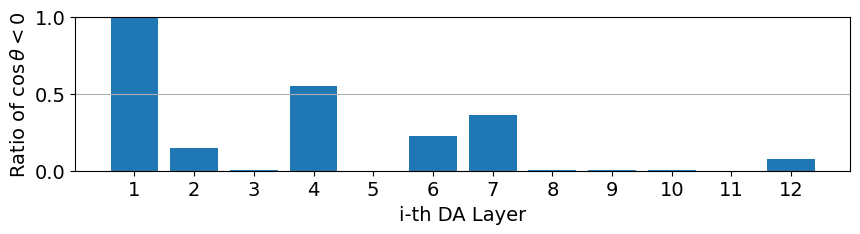}
    \label{fig:grad_cos_cc12m}
}\hfill
\subfigure[CIFAR-10]{
    \includegraphics[width=0.17\textwidth]{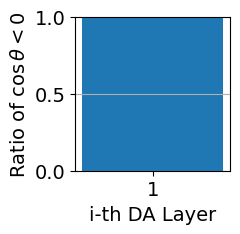}
    \label{fig:grad_cos_c10}
}\hfill
\subfigure[CIFAR-100]{
    \includegraphics[width=0.17\textwidth]{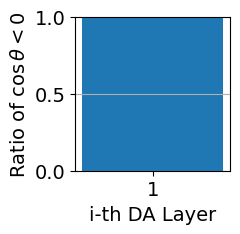}
    \label{fig:grad_cos_c100}
}
\caption{
Frequency of negative gradient alignment ($\cos(\nabla A_1$,$\nabla A_2)<0$) at each layer.
}
\label{fig:grad_cos}
\end{figure}

\subsection{Negative Gradient Alignment in Differential Attention}
\label{sec:exp_align}

Finally, we examine whether DA structurally induces negative gradient alignment. 
For perturbations $\xi$ bounded by $\|\xi\|\le 8/255$, we compute cosine similarity between gradients $\nabla_\xi A_1$ and $\nabla_\xi A_2$, and record the frequency of $\cos<0$.

Figure~\ref{fig:grad_cos} summarizes results. 
In DiffCLIP, negative alignment is strongest in the first layer but still present deeper. 
For CIFAR-trained ViT/DiffViT, even the simplest single-layer variants show dominant negative alignment. 

Thus, negative gradient alignment is not a rare anomaly but a structural property of DA, providing direct empirical evidence for the Fragile Principle.

\section{Discussion and Limitation}\label{sec:discussion}

\textbf{Summary of findings.}
Our study reveals a structural trade-off at the heart of Differential Attention (DA). 
While its subtractive design sharpens task-relevant focus and reduces contextual hallucination, it also inherently amplifies sensitivity when the two branches exhibit negative gradient alignment. 
This \emph{Fragile Principle} manifests as increased local Lipschitz constants, higher attack success rates, and frequent opposing gradient flows. 
At the same time, our depth-dependent analysis shows that DA models can gain robustness at small perturbation budgets when stacking layers, owing to the cumulative effect of noise cancellation across depth. 
However, this protection diminishes as perturbation strength grows.

\textbf{Limitations.}
Our theoretical analysis relies on local linearization and isolates the attention mechanism from other layers. 
These assumptions clarify the structural role of subtraction but may not capture full nonlinear interactions. 
Moreover, our exploration of the subtraction weight $\lambda$ was limited to initialization; a fuller study of its training dynamics remains open.
Another limitation lies in the controlled nature of our experiments. Since our aim is
to isolate the mechanism-specific behavior of DA, we focus on
small, gradient-based perturbations and do not evaluate robustness under natural or
semantic adversarial examples (e.g., distribution shifts, physical noise, or
natural-adversarial image curation). These factors introduce complex, task- and
dataset-dependent effects that can obscure the structural phenomena we analyze. 
Extending DA’s sensitivity analysis to such real-world perturbations is an important
direction for future work, but is deliberately outside the scope of our mechanism-level
study.

\textbf{Future directions for robustness.}
Several avenues could mitigate DA’s fragility. 
First, careful tuning of $\lambda$ offers a trade-off between noise suppression and adversarial robustness, as suggested in Table~\ref{tab:asr_lam_var}. 
Second, our results suggest that increasing DA depth itself can serve as a lightweight mitigation: deeper models required larger perturbations under CW and exhibited lower ASR under PGD/AA at small budgets, though this advantage disappears under stronger attacks (Fig.~\ref{fig:depth_cancellation}). 
Third, adversarial training with small perturbations reduced ASR (Appendix~\ref{subsec:adv_train}), showing compatibility with standard defenses. 
Finally, we view DA as a natural candidate for integration with certified defenses, where its selective subtraction could complement Lipschitz-based guarantees.

Overall, this work highlights both the promise and the fragility of subtractive attention mechanisms, and we hope it inspires future designs that preserve DA’s discriminative focus while mitigating its structural vulnerability.

\section{Conclusion}
\label{sec:conclusion}

We analyzed Differential Attention (DA) and revealed a structural fragility that arises under adversarial perturbations. 
While DA sharpens focus and suppresses contextual hallucination on clean inputs, its subtractive design can amplify local sensitivity via negative gradient alignment. 
This fragility reflects a fundamental trade-off: DA’s discriminative benefits may come at the cost of adversarial vulnerability. 
We validated this trade-off through gradient alignment analysis, local Lipschitz estimation, and depth-dependent experiments. 
Importantly, our results show that increasing DA depth can naturally mitigate small perturbations via cumulative noise cancellation, although this effect diminishes under stronger attacks. 
These insights highlight the need to jointly consider clean performance, adversarial robustness, and architectural depth when designing future attention mechanisms.

\paragraph{Ethics Statement.}
This work focuses on the theoretical and empirical analysis of Differential Attention (DA) under adversarial perturbations. 
Our study does not involve human subjects, personal or sensitive data, or any identifiable information. 
All datasets used in this paper are publicly available benchmarks (CIFAR-10/100, Tiny ImageNet, MSCOCO, ImageNet-1k) that are widely adopted in the research community and collected under ethical and legal guidelines. 
While our analysis reveals structural vulnerabilities in DA, the goal is to advance understanding of robustness in attention mechanisms and inspire more resilient designs. 
We do not foresee direct harmful applications of these findings; rather, highlighting DA’s fragility is a necessary step toward safer deployment in safety-critical domains such as autonomous driving or medical AI.

\paragraph{Reproducibility Statement.}
We have taken care to ensure the reproducibility of our results. 
All model architectures, training setups, and hyperparameters are described in the main text and Appendix~\ref{sec:impl}. 
For theoretical contributions, assumptions are explicitly stated, and proofs of lemmas and theorems are provided in Appendix~\ref{sec:proof}. 
For empirical evaluation, we detail dataset specifications (CIFAR-10/100, Tiny ImageNet, MSCOCO, ImageNet-1k) and attack configurations (PGD, CW, AutoAttack), including perturbation budgets and norm constraints. 
Results are consistently reported across multiple datasets and model configurations (e.g., depth, $\lambda$ initialization) to support robustness of conclusions. 
Figures and tables in the main text and Appendix provide complete experimental outcomes, ensuring transparency and replicability.

\clearpage

\bibliography{reference}
\bibliographystyle{iclr2026_conference}

\clearpage
\appendix
\section{Proof of Theoretical Claims}\label{sec:proof}

This section provides complete proofs of the lemmas and theorems presented in Section~\ref{sec:analysis}.

\subsection{Proof of Lemma 1}\label{subsec:proof_lemma1}
\begin{proof}
Let $\theta$ be the angle between the input gradients of $A_1$ and $A_2$, i.e., 
$\cos \theta = \frac{\langle \nabla_\xi A_1, \nabla_\xi A_2 \rangle}{\|\nabla_\xi A_1\| \cdot \|\nabla_\xi A_2\|}$.
From the definition of DA, $A_{\text{DA}} = A_1 - \lambda A_2$, we compute its gradient with respect to input perturbation $\xi$:
$\nabla_\xi A_{\text{DA}} = \nabla_\xi A_1 - \lambda \nabla_\xi A_2$.
Taking the squared norm:
\begin{align*}
\| \nabla_\xi A_{\text{DA}} \|^2 
&= \| \nabla_\xi A_1 \|^2 + \lambda^2 \| \nabla_\xi A_2 \|^2 - 2\lambda \langle \nabla_\xi A_1, \nabla_\xi A_2 \rangle \\
&= \| \nabla_\xi A_1 \|^2 + \lambda^2 \| \nabla_\xi A_2 \|^2 - 2\lambda \| \nabla_\xi A_1 \| \| \nabla_\xi A_2 \| \cos \theta.
\end{align*}
\end{proof}

\subsection{Proof of Theorem 1}\label{subsec:proof_theorem1}
\begin{proof}
By substituting $\rho = |\nabla_\xi A_2| / |\nabla_\xi A_1|$ into the result of Lemma~1, we obtain:
\[
\| \nabla_\xi A_{\text{DA}} \|^2
= \| \nabla_\xi A_1 \|^2 \left( 1 + \lambda^2 \rho^2 - 2\lambda \rho \cos \theta \right).
\]

\textbf{Case 1:} If $\cos \theta = +1$ (aligned gradients), then:
\[
\| \nabla_\xi A_{\text{DA}} \|^2 
= \| \nabla_\xi A_1 \|^2 \left(1 + \lambda^2 \rho^2 - 2\lambda \rho\right)
= \| \nabla_\xi A_1 \|^2 (1 - \lambda \rho)^2.
\]

\textbf{Case 2:} If $\cos \theta = -1$ (oppositely aligned gradients), then:
\[
\| \nabla_\xi A_{\text{DA}} \|^2 
= \| \nabla_\xi A_1 \|^2 \left(1 + \lambda^2 \rho^2 + 2\lambda \rho\right)
= \| \nabla_\xi A_1 \|^2 (1 + \lambda \rho)^2.
\]
\end{proof}

\subsection{Proof of Theorem 2}\label{subsec:proof_theorem2}
\begin{proof}
Now define $\gamma := \|\nabla_\xi A_1\| / \|\nabla_\xi A_{\text{base}}\|$. Substituting this into the expression derived in Theorem~1, we get:
\[
\|\nabla_\xi A_{\text{DA}}\|^2 = \gamma^2 \|\nabla_\xi A_{\text{base}}\|^2 (1 + \lambda^2 \rho^2 - 2\lambda \rho \cos\theta).
\]
Taking square roots on both sides gives the desired result (\ref{eq:theo2}).
Finally, note that this expression is minimized when $\cos\theta = +1$ (fully aligned), and maximized when $\cos\theta = -1$ (fully opposed), as the cross term $-2\lambda \rho \cos\theta$ becomes most negative or positive, respectively.
\end{proof}

\subsection{Proof of Theorem 3}\label{subsec:proof_theorem3}
\begin{proof}
Let us consider the set of perturbations $\xi$ with $\|\xi\| \leq \epsilon$. 
Since the gradients $\nabla_\xi A_1$ and $\nabla_\xi A_2$ vary continuously with direction, there exists a direction of $\xi$ within this $\epsilon$-ball such that the angle $\theta$ satisfies the required condition.
Thus, if the inequality
$\cos\theta < \frac{1 + \lambda^2 \rho^2 - \gamma^{-2}}{2\lambda \rho}$
is satisfied for some value of $\theta$ (i.e., some $\xi$), then such a direction $\xi$ with $|\xi| \leq \epsilon$ exists, as the set of possible angles $\theta$ spans a continuous range over this ball. Since $\theta$ varies continuously with $\xi$, and $\cos\theta$ spans a continuous range, the intermediate value theorem ensures the existence of such a $\xi$.
\end{proof}

\subsection{Proof of Lemma 2}\label{subsec:proof_lemma2}
\begin{proof}
Recall that the local Lipschitz constant of a function $f$ at input $x$ is defined as:
$L_f(x) = \sup_{\|\xi\| \neq 0} \frac{\| f(x + \xi) - f(x) \|}{\|\xi\|}$.
If $f$ is differentiable at $x$, then:
$L_f(x) \leq \|\nabla_x f(x)\|$.
We apply this to the attention maps. Since $A_{\text{DA}}$ is differentiable almost everywhere, we have:
$L_{\text{DA}}(x) \leq \| \nabla_\xi A_{\text{DA}}(x) \|$.
Similarly,
$L_{\text{base}}(x) \leq \| \nabla_\xi A_{\text{base}}(x) \|$.

Now, from Theorem 2, we already have the following relationship:
$\frac{\|\nabla_\xi A_{\text{DA}}(x)\|}{\|\nabla_\xi A_{\text{base}}(x)\|} = \gamma \sqrt{1 + \lambda^2 \rho^2 - 2\lambda \rho \cos\theta}$.
Using the fact that the Lipschitz constant is upper-bounded by the gradient norm:
$\frac{L_{\text{DA}}(x)}{L_{\text{base}}(x)}
\leq \frac{\|\nabla_\xi A_{\text{DA}}(x)\|}{\|\nabla_\xi A_{\text{base}}(x)\|}
= \gamma \sqrt{1 + \lambda^2 \rho^2 - 2\lambda \rho \cos\theta}$.
\end{proof}

\subsection{Proof of Theorem~4}
\label{app:proof_theorem4}

For standard attention, the layerwise deviation satisfies
\[
\|\Delta^{(d)}\| \le L_{\text{base}}^{(d)} \|\Delta^{(d-1)}\|,
\]
which unrolls to
\[
\|\Delta^{(D)}\| \le \Big(\prod_{d=1}^{D} L_{\text{base}}^{(d)}\Big)\|\xi\|
= (\bar L_{\text{base}})^{D}\|\xi\|.
\]

For DA, the subtractive form introduces an additional contraction factor
$\alpha^{(d)}\in(0,1]$ per layer, giving
\[
\|\Delta^{(d)}\| \le \alpha^{(d)} L_{\text{DA}}^{(d)} \|\Delta^{(d-1)}\|.
\]
Thus
\[
\|\Delta^{(D)}\| \le \Big(\prod_{d=1}^{D} \alpha^{(d)} L_{\text{DA}}^{(d)}\Big)\|\xi\|
= (\bar\alpha \,\bar L_{\text{DA}})^{D}\|\xi\|,
\]
where $\bar\alpha$ and $\bar L_{\text{DA}}$ denote geometric means. 
Since DA’s subtractive structure systematically biases $\alpha^{(d)}<1$,
this establishes the stated bound. \hfill$\square$

\section{Robustness Certifiable Radius}\label{sec:certified}

\paragraph{Robustness Certifiable Radius.}
Further, Lipschitz continuity is central to robustness certification. For a classifier $f$, the certified radius around input $x$ is lower-bounded as:
\begin{equation*}
R(x) = \frac{F_y(x) - \max_{i \neq y} F_i(x)}{L_f(x)},
\end{equation*}
where $F_i(x)$ are the class logits, $y$ is the true label, $F_y(x)$ is the logit for the correct class, and $L_f(x)$ is the local Lipschitz constant.
The numerator is the classification margin, and the denominator reflects input sensitivity.

The Lipschitz constant of the entire model can be upper-bounded by the product of the per-layer constants $L_f(x) \leq \prod_{i=1}^L L_f^{(i)}(x)$.
Since we compare models that differ only in the attention layer, we treat all other $L_f^{(i)}(x)$ as fixed, isolating the contribution of the attention mechanism to the overall sensitivity. Thereby yielding the following theorem about the certified radius.
\begin{theorem}
Let $R_{\text{DA}}(x)$ and $R_{\text{base}}(x)$ be the certified radius for classifier $f$ incorporating DA and standard attention, respectively.
Assume that all per-layer Lipschitz constants $L_f^{(i)}(x)$ except for the attention layer are identical between the two models, isolating the effect of the attention mechanism.
Then, the ratio of certified radii satisfies:
\begin{equation}
\frac{R_{\text{DA}}(x)}{R_{\text{base}}(x)}
\geq \frac{\Delta m}{\gamma \sqrt{1 + \lambda^2 \rho^2 - 2\lambda \rho \cos\theta}}
\label{eq:relative_radius}
\end{equation}
where $m_{\text{DA}}$ is the classification margin under DA, and  $\Delta m = \frac{m_{\text{DA}}}{m_{\text{base}}}$.
In particular, under the opposite alignment $\cos\theta = -1$, the bound admits the asymptotic form $\mathcal{O}\left( \frac{\Delta m}{\gamma (1+\lambda\rho)} \right)$.
\end{theorem}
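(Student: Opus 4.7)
The plan is to reduce the ratio of certified radii to a ratio of attention-layer local Lipschitz constants, and then to invoke Lemma~2 directly. First, I would start from the certified-radius expression $R(x) = m(x)/L_f(x)$ introduced just above the theorem, with $m(x)$ the classification margin and $L_f(x)$ the (estimated) model Lipschitz constant. Taking the quotient for the two classifiers gives
\[
\frac{R_{\text{DA}}(x)}{R_{\text{base}}(x)}
= \Delta m \cdot \frac{L_{f,\text{base}}(x)}{L_{f,\text{DA}}(x)}.
\]

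Second, I would use the product bound $L_f(x) \leq \prod_i L_f^{(i)}(x)$ as the operational Lipschitz estimate, matching the convention adopted in the paper's preceding discussion. Because the two classifiers share all non-attention per-layer Lipschitz constants by assumption, those factors cancel in the quotient, leaving $L_{f,\text{base}}(x)/L_{f,\text{DA}}(x) = L_{\text{base}}(x)/L_{\text{DA}}(x)$, i.e.\ the ratio at the attention layer alone. I would then invoke Lemma~2, which yields $L_{\text{DA}}(x)/L_{\text{base}}(x) \leq \gamma\sqrt{1+\lambda^2\rho^2-2\lambda\rho\cos\theta}$. Taking reciprocals (valid since both Lipschitz constants are positive) and multiplying by $\Delta m$ produces the main inequality of the theorem.

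For the asymptotic form I would specialise to $\cos\theta = -1$, where the identity $1+\lambda^2\rho^2+2\lambda\rho = (1+\lambda\rho)^2$ collapses the radical to $1+\lambda\rho$ and directly yields the stated $\mathcal{O}\!\left(\Delta m/(\gamma(1+\lambda\rho))\right)$ behaviour.

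The main obstacle is conceptual rather than computational: one must carefully track the direction of the inequality when passing from an upper bound on $L_{\text{DA}}/L_{\text{base}}$ (Lemma~2) to a lower bound on the certified-radius ratio, and the non-attention cancellation step depends on treating the product Lipschitz bound as the same operational certification estimate for both classifiers. Once that convention is fixed consistently and positivity of all Lipschitz factors and margins is acknowledged, the argument reduces to a clean chain of substitutions.
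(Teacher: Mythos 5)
Your proposal is correct and follows essentially the same route as the paper's proof: write $R_{\text{DA}}(x)/R_{\text{base}}(x) = \Delta m \cdot L_{\text{base}}(x)/L_{\text{DA}}(x)$ (with the non-attention layer constants cancelling under the stated assumption) and then apply Lemma~2, inverting the bound to obtain the lower bound on the radius ratio. Your explicit handling of the inequality direction and the per-layer cancellation is simply a more careful spelling-out of what the paper leaves implicit.
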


\begin{proof}
Now, using the definition of certified radius and margin ratio $\Delta m := \frac{m_{\text{DA}}}{m_{\text{base}}}$, we compute:
\[
\frac{R_{\text{DA}}(x)}{R_{\text{base}}(x)} = \frac{m_{\text{DA}} / L_{\text{DA}}(x)}{m_{\text{base}} / L_{\text{base}}(x)} = \Delta m \cdot \frac{L_{\text{base}}(x)}{L_{\text{DA}}(x)}.
\]
Then applying the Lipschitz ratio bound derived in Lemma 2, we finally have the claim (\ref{eq:relative_radius}). 
\end{proof}

This formulation illustrates a key trade-off in DA: while the subtractive mechanism can sharpen focus and modestly increase classification margins (i.e., $\Delta m > 1$), it also amplifies local sensitivity through the $\gamma \sqrt{1 + \lambda^2 \rho^2 - 2\lambda \rho \cos\theta}$ factor—particularly under negative gradient alignment. Since this amplification often outweighs the margin gain, the certified robustness deteriorates, as reflected in a shrinking certified radius.

Here, while $\gamma$ and $\lambda$ are fixed constants, both $\Delta m$, $\rho$ and $\theta$ depend on the perturbation direction $\xi$ and can thus be influenced by an adversary. This implies that an attacker may reduce the certified radius by manipulating $\rho$ (e.g., increasing gradient asymmetry) or suppressing the margin $m_{\text{DA}}$ through targeted perturbations.

\section{Implementation Details}\label{sec:impl}

This section describes the implementation details of our experiments to ensure reproducibility.

\subsection{Target Models} \label{subsec:model_setting}
DiffCLIP~\citep{DBLP:journals/tmlr/HammoudG25} extends CLIP~\citep{DBLP:conf/icml/RadfordKHRGASAM21} by replacing standard self-attention with Differential Attention (DA). It retains CLIP’s original contrastive training objective between images and text, while using DA to suppress redundant activations and enhance focus on discriminative regions.
Additionally,~\citep{DBLP:journals/tmlr/HammoudG25} introduces DiffViT, a vision transformer variant that incorporates DA into ViT architectures.

\textbf{DiffCLIP and CLIP.}
We use the pre-trained weights of DiffCLIP\_ViTB16\_CC12M released by~\citep{DBLP:journals/tmlr/HammoudG25}, which are trained on CC12M~\citep{DBLP:conf/cvpr/ChangpinyoSDS21} datasets.
The authors have also publicly released the corresponding CLIP models via their HuggingFace repository.\footnote{\url{https://huggingface.co/collections/hammh0a/diffclip-67cd8d3b7c6e6ea1cc26cd93}}

\textbf{DiffViT and ViT.}
For ViT, we adopt the architecture provided in the \texttt{timm} library~\citep{rw2019timm} and train it on CIFAR-10 and CIFAR-100.
We implement DiffViT following the design in~\citep{DBLP:journals/tmlr/HammoudG25} and train it on the same datasets.
We train the models for 100 epochs on CIFAR-10 and 300 epochs on CIFAR-100 using the Adam optimizer with a learning rate of $\eta = 5 \times 10^{-4}$ and a batch size of $B = 128$.

\subsection{Adversarial Attacks}\label{subsec:attack_setting}

\subsubsection{PGD} 
We evaluate adversarial robustness using both adversarial patch attacks and adversarial examples, optimized via Projected Gradient Descent (PGD)~\citep{DBLP:conf/iclr/MadryMSTV18} under $\ell_\infty$ norm constraints.

For CLIP and DiffCLIP, the attack objective is to break the semantic alignment between visual and textual representations. We minimize the cosine similarity between the perturbed visual embedding and its associated text prompt:
\[
\min_{\delta} \cos\left(\phi_{\text{visual}}(x + \delta), \phi_{\text{text}}(t)\right),
\]
where $x$ is the input image, $t$ is the class prompt (e.g., “a photo of $<$class$>$”), and $\phi_{\text{visual}}$, $\phi_{\text{text}}$ denote the visual and text encoders, respectively.

For ViT and DiffViT on CIFAR-10/100, the attack instead minimizes the (negative) cross-entropy loss to induce misclassification.

In the adversarial patch setting, the perturbation $\delta$ is constrained to a fixed-size square patch of dimension $w \times w$ inserted at a random location in $x$. We set $\epsilon = 1.0$ and vary the patch size $w$.

\subsubsection{AutoAttack}
AutoAttack is an ensemble adversarial evaluation framework that combines four complementary attacks: APGD-CE, APGD-DLR, FAB~\citep{DBLP:conf/icml/Croce020}, and Square Attack~\citep{DBLP:conf/eccv/AndriushchenkoC20}. 
This combination provides a strong and diverse set of adversarial perturbations for robustness evaluation. 
We employ the official AutoAttack implementation available at GitHub\footnote{\url{https://github.com/fra31/auto-attack}}.

\subsubsection{Carlini--Wagner (CW) Attack}
The Carlini--Wagner attack~\citep{DBLP:conf/sp/Carlini017} is a strong optimization-based adversarial method that searches for minimal perturbations capable of inducing misclassification. 
Unlike iterative gradient-based methods such as PGD, CW directly formulates the attack as a constrained optimization problem under an $\ell_2$ norm. 
We use a PyTorch implementation based on the open-source repository\footnote{\url{https://github.com/kkew3/pytorch-cw2}}.

\section{Additional Experimental Results}

\subsection{ASR on TinyImageNet}\label{subsec:tinyimagenet}

On Tiny ImageNet (Table~\ref{tab:tin_imagenet_asr}), we observe a consistent pattern: DiffViT shows higher ASR than ViT across patch sizes and small $\ell_\infty$ budgets. 
The gap is most pronounced at $\epsilon{=}1/255$.
At larger budgets ($\epsilon{=}4/255$), the two models converge, echoing the cancellation–fragility tradeoff seen in CIFAR experiments.

\begin{table}[h]
\centering
\small
\caption{ASR on TinyImageNet under PGD Attacks}
\begin{tabular}{lccccccc}
\toprule
Model & patch=8 & patch=4 & patch=2 & $\epsilon=4/255$ & $\epsilon=2/255$ & $\epsilon=1/255$ & $\epsilon=0.5/255$ \\
\midrule
DiffViT & \textbf{0.6465} & \textbf{0.4556} & \textbf{0.1834} & 0.6318 & \textbf{0.3866} & \textbf{0.2822} & \textbf{0.2364} \\
ViT     & 0.6139 & 0.4072 & 0.1498 & \textbf{0.6490} & 0.3857 & 0.2309 & 0.1866 \\
\bottomrule
\end{tabular}
\label{tab:tin_imagenet_asr}
\end{table}

\subsection{Adversarial Training}\label{subsec:adv_train}

In addition to balancing the structural trade-off inherent to DA, complementary robustness strategies can be applied.
Adversarial training offers a complementary strategy that orthogonal to  DA’s subtractive structure.

Table~\ref{tab:adv_train} report the result.
The adversarial training with small perturbations successfully reduced ASR without harming classification accuracy.
However, beyond $\epsilon = 0.5/255$, it drastically drops the clean accuracy.
Therefore, adversarial training is a candidate technique to increase the adversary robustness, but is not a silver-bullet.

\begin{table}[h]
\centering
\small
\caption{Attack success rates and clean accuracy of vanilla DiffViT and DiffViT with adversarial training when varying $\epsilon$ on CIFAR-10.}
\begin{tabular}{lcccc}
\toprule
 & Accuracy & ASR ($\epsilon$=$0.25/255$) & ASR ($\epsilon$=$0.5/255$) & ASR ($\epsilon$=$1/255$) \\
\midrule
DiffViT (CIFAR-10) & 0.8700 & 0.3310 & 0.6367  & 0.8498 \\
+ Adv. Train ($\epsilon$=$0.25/255$) & 0.9518 & 0.2537 & 0.6434 & 0.8824 \\
+ Adv. Train ($\epsilon$=$0.5/255$) & 0.8931 & 0.1787 & 0.4440 & 0.7676 \\
+ Adv. Train ($\epsilon$=$1/255$) & 0.7384 & 0.0982 & 0.2089 & 0.4302 \\
\bottomrule
\end{tabular}
\label{tab:adv_train}
\end{table}

\subsection{PGD with Random Restart}

\begin{figure}[t]
\centering
\subfigure[ASR under PGD for DiffCLIP / CLIP]{
    \includegraphics[width=0.48\textwidth]{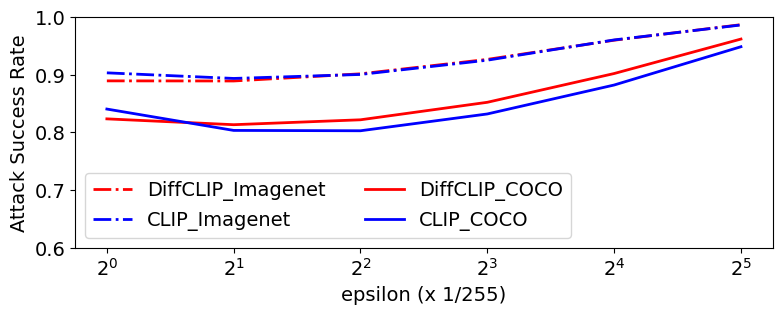}
    \label{fig:asr_pgd_restart_clip}
}
\hfill
\subfigure[ASR under PGD for DiffViT / ViT]{
    \includegraphics[width=0.48\textwidth]{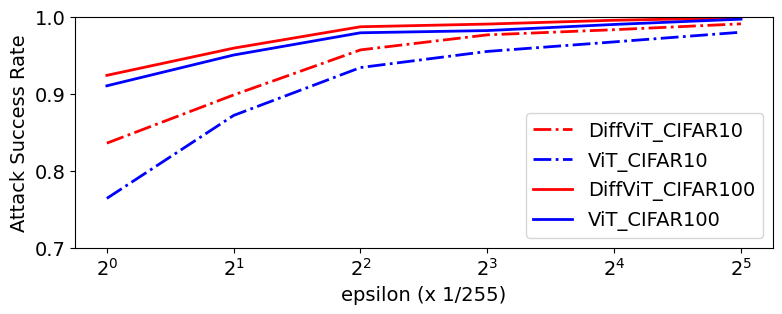}
    \label{fig:asr_pgd_restart_vit}
}
\caption{ASR under PGD with Random Restart.}
\label{fig:asr_restart}
\end{figure}

To verify the stability of our PGD evaluations, we additionally performed PGD attacks with random restarts on all ViT/DiffViT and CLIP/DiffCLIP models. 
The original results in the main paper reported the average ASR over multiple runs, whereas the new experiments explicitly re-run PGD with three random initializations per input.

Figure~\ref{fig:asr_restart} shows the result. Across all datasets (CIFAR-10/100, ImageNet, and COCO), random restart increases absolute ASR values—as expected from a stronger optimization procedure—but the \emph{relative behavior} between DA and standard attention remains unchanged. In all settings, DA exhibits higher sensitivity for small perturbations, and the depth-dependent trends observed in the main paper are fully preserved.

For ViT/DiffViT, random restart yields consistent increases in ASR as $\epsilon$ grows, with DiffViT remaining uniformly more vulnerable than ViT. For CLIP/DiffCLIP, evaluations on both ImageNet and COCO also show the same pattern: DiffCLIP maintains equal or higher ASR compared to CLIP across all perturbation budgets.

Overall, the random-restart results confirm that our conclusions are not an artifact of single-run PGD but remain robust under stronger multi-start attacks.

\section{LLM Usage}
We used large language models (LLMs) as writing assistants to improve clarity and readability. 
They were not involved in generating ideas, conducting experiments, or producing results. 
All technical content and contributions are solely by the authors.

\end{document}